\newtheorem{theorem}{Theorem}
\newtheorem{lemma}{Lemma}
\newtheorem{corollary}{Corollary}
\def\sph2{\mathbb{S}^{2}}
\def\b{\mathbf}
\def\RR{\mathbb{R}}
\def\P{\mathbf{P}}
\begin{document}

\begin{frontmatter}



\title{Bridging Smoothness and Approximation: Theoretical Insights into Over-Smoothing in Graph Neural  Networks}

 \author[1]{Guangrui Yang}
\ead{yanggrui@mail2.sysu.edu.cn}
\author[2]{Jianfei Li}
\ead{jianfeili2-c@my.cityu.edu.hk}
\author[3,4]{Ming Li}
\ead{mingli@zjnu.edu.cn}
\author[2]{Han Feng\corref{cor1}}
\ead{hanfeng@cityu.edu.hk}
\author[5]{Dingxuan Zhou}
\ead{dingxuan.zhou@sydney.edu.au}
\cortext[cor1]{Corresponding author}
\address[1]{Department of Mathematics, College of Mathematics and Informatics, South China Agricultural University, Guangzhou, China.}
\address[2]{Department of Mathematics, City University of Hong Kong, Hong Kong, China}
\address[3]{Zhejiang Key Laboratory of Intelligent Education Technology  and Application, Zhejiang Normal University, Jinhua 321004, China}
\address[4]{Zhejiang Institute of Optoelectronics, Jinhua 321004, China}
\address[5]{School of Mathematics and Statistics, The University of Sydney, Sydney, New South Wales 2006, Australia}

\begin{abstract}
In this paper, we explore the approximation theory of functions defined on graphs. Our study builds upon the approximation results derived from the $K$-functional. We establish a theoretical framework to assess the lower bounds of approximation for target functions using Graph Convolutional Networks (GCNs) and examine the over-smoothing phenomenon commonly observed in these networks. Initially, we introduce the concept of a $K$-functional on graphs, establishing its equivalence to the modulus of smoothness. We then analyze a typical type of GCN to demonstrate how the high-frequency energy of the output decays, an indicator of over-smoothing. This analysis provides theoretical insights into the nature of over-smoothing within GCNs. Furthermore, we establish a lower bound for the approximation of target functions by GCNs, which is governed by the modulus of smoothness of these functions. This finding offers a new perspective on the approximation capabilities of GCNs. In our numerical experiments, we analyze several widely applied GCNs and observe the phenomenon of energy decay. These observations corroborate our theoretical results on exponential decay order.
\end{abstract}



\begin{keyword}
Approximation theory \sep modulus of smoothness \sep $K$-functional \sep graph convolutional neural networks \sep over-smoothing.
\end{keyword}

\end{frontmatter}


\section{Introduction}
Graph Convolutional Networks (GCNs) have emerged as a powerful category of deep learning models that extend the prowess of traditional neural network architectures to data that are structured as graphs \cite{gcn,huang2022graph}. Such data encapsulate relationships and interdependencies between entities, which are crucial in domains like social networks, biological networks, recommendation systems, and more. The ability of GCNs to leverage the connectivity and features of graph data has led to significant advancements and state-of-the-art results in various tasks \cite{bacciu2020gentle,Survey_SunMS, Survey_ZhangCQ, Survey_ZhuWW,li2024guest,Zhou2024}.

However, unlike other deep learning models, graph neural networks did not, until recently, benefit significantly from increasing depths. As GCNs deepen, a common issue they encounter is known as over-smoothing \cite{rusch2023survey,2021NeurIPS_Dirichlet,2022pmlr,2020AAAI,2020Towards,2021NEURIPS_OnProvable}, where the node representations become indistinguishable and converge to a subspace that lacks the discriminative power necessary for downstream tasks. This phenomenon fundamentally limits the ability of GCNs to capture higher-order dependencies with graphs, and has attracted intense research attention \cite{cai2020note,keriven2022not,oono2019graph,Huang2024}. 
Li et al. \cite{Li2018} demonstrates that the graph convolution in the GCN model is essentially a specific type of Laplacian smoothing. This mechanism underpins the effectiveness of GCNs; however, it also introduces the risk of over-smoothing when numerous convolutional layers are employed. Chen et al. \cite{cai2020note} investigated the problem of over-smoothing using Dirichlet energy, noting that for deep GCN models, the Dirichlet energy of the embeddings tends to approach zero, which leads to a reduction in their ability to distinguish between different features. Wu et al. \cite{wu2024demystifying} prove that the graph attention mechanism cannot prevent oversmoothing and loses expressive power exponentially. For more detailed information on the formal definition of over-smoothing, associated metrics, and existing strategies for reducing it, we refer the readers to the recent comprehensive survey by Rusch et al. \cite{rusch2023survey}.


The work presented in this paper addresses the theoretical underpinnings of the over-smoothing phenomenon through the lens of approximation theory on graph signals. The approximation theory for signals on graphs has arisen in recent literature. Building on Schrödinger's semigroup of operators, I. Z. Pesenson et al. \cite{2010Pesenson} established a sparse approximation theory on graphs. In \cite{2012Zhu}, the authors defined a metric to measure the smoothness of a graph signal, and they found that both the eigenvalues of the graph Laplacian and graph total variation affect the upper bound of the $M$-term linear Fourier approximation error. Furthermore, L. Avena et al. \cite{2018Avena,2020Avena} explored the multiscale analysis of functions on graphs and proved a Jackson-like inequality. In \cite{2009Pesenson,2010Pesenson_Remset,2010Pesenson_Eigenmaps},  I. Z. Pesenson et al. studied the approximation of functions on the graph using graph splines defined by a variational problem. In a recent work \cite{2021_appHuang}, the authors proposed the definition of the modulus of smoothness on the graph by the graph translation operators, and then proved the Jackson's and Bernstein's inequalities for functions defined on graphs. These works have contributed greatly to the establishment of approximation theory on graphs. 

In this paper, we introduce the definition of $K$-functional on graphs and then establish a strong equivalence relation between  moduli of smoothness and $K$-functionals for graph signals. These concepts, deeply rooted in approximation theory, are critical in understanding behaviors of functions on discrete structures like graphs, particularly in the context of their smoothness and how well they can be approximated by bandlimited functions.

By applying this theoretical framework, we establish a novel lower estimation of the approximation property of graph neural networks. This estimation is not just a measure of the ability of GNNs to approximate graph signals but also a diagnostic tool to assess the extent to which over-smoothing may be affecting a given GCN architecture.


The main contributions of this paper are listed in the following.

\begin{itemize}
  \item Theoretical Framework: We established a robust theoretical framework that equates $K$-functionals with moduli of smoothness on graphs. This framework is instrumental in clarifying how GCNs process and transform data.
\item Decay of High-Frequency Energy: Our analysis revealed that the decay of high-frequency energy in GCNs is a critical factor contributing to over-smoothing. This insight not only expands our understanding of the dynamics within GCNs but also suggests potential modifications to control or mitigate over-smoothing effects.
  \item Lower Bounds of Approximation: By establishing lower bounds for the approximation of target functions, governed by moduli of smoothness, we provided quantifiable metrics that can guide the development and tuning of GCNs to achieve desired approximation accuracies.
\end{itemize}

The rest of this paper is structured as follows: Section \ref{sec:Pre} provides background on graph signal processing and introduces moduli of smoothness and $K$-functionals in the context of graph signals. Section \ref{sec:app_theory} details our main approximation theory results for $K$-functionals on the graph, laying out a strong equivalence relationship between 
$K$-functionals and moduli of smoothness. In Section \ref{sec:lower_bound}, we analyze a typical type of GCNs to demonstrate how the high-frequency energy of the output decays indicating over-smoothing phenomenon,  and then based on the approximation results of $K$-functional, we prove a lower bound of approximating the target functions by GCNs. In Section \ref{sec:Exp}, we present numerical experiments to validate our results. Finally, Section \ref{sec:conclusion} gives a brief conclusion of this paper and some discussions on strategies to mitigate the over-smoothing problem and enhance the approximation capability of GCNs.

\section{ Notations and Preliminaries} \label{sec:Pre}
This section recalls some necessary notations and concepts for subsequent sections.

\subsection{Graph Signal Processing}
Let \(\mathcal{G} = \{\mathcal{V}, \mathbf{A}\}\) be an undirected, weighted, and connected graph, where \(\mathcal{V} = \{\mathbf{v}_1, \mathbf{v}_2, \ldots, \mathbf{v}_N\}\) denotes the vertices of the graph, and \(\mathbf{A} = (A_{ij}) \in \mathbb{R}^{N \times N}\) is the adjacency matrix. Here, \(A_{ij} > 0\) if there is an edge connecting vertex \(\mathbf{v}_i\) and \(\mathbf{v}_j\), and \(A_{ij} = 0\) otherwise. The graph Laplacian of \(\mathcal{G}\) is defined as \(\mathbf{L} = \mathbf{D} - \mathbf{A}\), where \(\mathbf{D} = \operatorname{diag}(d_1, d_2, \ldots, d_N)\) is the diagonal degree matrix with \(d_i = \sum_{j=1}^N A_{ij}\) for \(i = 1, 2, \ldots, N\).

A graph signal defined on \(\mathcal{G}\) is a function \(f : \mathcal{V} \rightarrow \mathbb{C}\), where \(f(\mathbf{v}_i)\) denotes the signal value at node \(\mathbf{v}_i\). In this paper, we only consider graphs of finite size \(N\), so the signal \(f\) is also equivalent to a vector \(f \in \mathbb{C}^N\). For convenience, we use \(f(i)\) to represent \(f(\mathbf{v}_i)\) in the rest of this paper. Since \(\mathbf{L}\) is real and symmetric, it can be orthogonally diagonalized as \(\mathbf{L} = \mathbf{U} \Lambda \mathbf{U}^*\), where \(\mathbf{U} = (\mathbf{u}_1, \mathbf{u}_2, \ldots, \mathbf{u}_N) \in \mathbb{C}^{N \times N}\), \(\Lambda = \operatorname{diag}(\lambda_1, \lambda_2, \ldots, \lambda_N) \in \mathbb{R}^{N \times N}\) with \(0 = \lambda_1 < \lambda_2 \leq \cdots \leq \lambda_N = \lambda_{\max}\), and \(\mathbf{U}^*\) denotes the conjugate transpose of \(\mathbf{U}\).

The matrix \(\mathbf{U}\) (called the graph Fourier basis) is often used to define the Graph Fourier Transform (GFT) \cite{2013TheEmerging,2021Yanggrui}. Specifically, for any \(f \in \mathbb{C}^N\), its GFT is defined as \(\hat{f} = \mathbf{U}^* f\), and the inverse graph Fourier transform is \(f = \mathbf{U} \hat{f}\). In graph signal processing, the pair \(\{\lambda_j, \mathbf{u}_j\}\) is often referred to as the frequency and frequency component of graph \(\mathcal{G}\). Based on the GFT, the function \(f = \mathbf{U} \hat{f} = \sum_{j=1}^N \hat{f}(j) \mathbf{u}_j\) is decomposed into the sum of \(N\) components \(\{\hat{f}(j) \mathbf{u}_j\}_{j=1}^N\), where \(\hat{f}(j)\) is called the spectrum of \(f\) corresponding to the frequency \(\lambda_j\).

We denote by $\operatorname{PW}_{\lambda_{n}}(\mathcal{G})$ the signal space consisting of functions $f$ satisfying $\hat{f}(j)=0$ for all $j>n$, that is,

$$
\operatorname{PW}_{\lambda_{n}}(\mathcal{G})=\operatorname{span}\left(\mathbf{u}_{1}, \mathbf{u}_{2}, \ldots, \mathbf{u}_{n}\right).
$$
The space $\textrm{PW}_{\lambda_{n}}(\mathcal{G})$ is the well-known Paley-Wiener space, and its orthogonal projection operator is denoted by $\mathbf{P}_{n}$, i.e., $\mathbf{P}_{n}=\mathbf{U}_{n} \mathbf{U}_{n}^{*}$, where $\mathbf{U}_{n}=$ $\left(\mathbf{u}_{1}, \mathbf{u}_{2}, \ldots, \mathbf{u}_{n}\right) \in \mathbb{C}^{N \times n}.$ 

For $n=1,2, \ldots, N$, the best approximation error (with 2 -norm) of $f$ by the functions in $P W_{\lambda_{n}}(\mathcal{G})$ is

$$
E_{n}(f):=\min _{g \in P W_{\lambda_{n}}(\mathcal{G})}\|f-g\|_{2}
$$

\subsection{Moduli of Smoothness and $K$-functionals on Graphs}

We now define the moduli of smoothness on graphs. For $s \in \mathbb{R}$,
\begin{equation}\label{equ:translation}
\mathbf{T}_{s} := e^{i s \sqrt{\mathbf{L}}} = \mathbf{U} \operatorname{diag}\left(e^{is \sqrt{\lambda_{1}}}, e^{i s \sqrt{\lambda_{2}}}, \ldots, e^{i s \sqrt{\lambda_{N}}}\right) \mathbf{U}^{*}
\end{equation}
defines a family of graph translation operators. In \cite{2021_appHuang}, using the definition of graph translation operator, the modulus of smoothness (with 2-norm) of order $r\in\mathbb{N}$ is defined as
\[
\omega_{r}(f, t) := \sup _{|s| \leq t} \left\|\triangle_{s}^{r} f\right\|_{2},~~t>0,
\]
where $\triangle_{s}^{r}: \mathbb{C}^{N} \rightarrow \mathbb{C}^{N}$ is the difference operator of order $r$, defined by
\[
\triangle_{s}^{r} f := \left(\mathbf{T}_{s} - \mathbf{I}\right)^{r} f, \quad f \in \mathbb{C}^{N},
\]
with $\mathbf{I}$ denoting the identity matrix. Note that in \cite{2021_appHuang}, the modulus of smoothness is defined using a general $p$-norm, but in this paper, we focus only on the modulus of smoothness with the 2-norm. According to the results in \cite{2021_appHuang}, $\omega_{r}(f, t)$ has the following four properties:

1. $\omega_{r}(f, \lambda t) \leq (1+\lambda)^{r} \omega_{r}(f, t)$;

2. $\omega_{r}(f_{1} + f_{2}, t) \leq \omega_{r}(f_{1}, t) + \omega_{r}(f_{2}, t)$;

3. $\omega_{r}(f, t) \leq 2^{j} \omega_{r-j}(f, t)$ for $1 \leq j \leq r$, where $\omega_{0}(f,t)=\|f\|_{2}$;

4. $\omega_{r}(f, t) \leq t^{j} \omega_{r-j}\left(\mathbf{L}^{j / 2} f, t\right)$ for $1 \leq j \leq r$.

Building upon the modulus of smoothness, the authors in \cite{2021_appHuang} proved that $E_{n}(f)$ can be controlled by $\omega_{r}(f, t)$, as shown in Lemma~\ref{lem:best_app}.

\begin{lemma}\label{lem:best_app}
     For $n = 2, 3, \ldots, N$,
\[
E_{n}(f) \leq C_{r}^{\prime} \omega_{r}\left(f, \lambda_{n}^{-1 / 2}\right),
\]
where $C_{r}^{\prime} = \frac{4}{\pi}(r+3)^{r}(r+1)$.
\end{lemma}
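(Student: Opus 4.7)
The plan is to pass to the graph Fourier domain and deduce the Jackson-type bound by a straightforward $h$-averaging argument. Since $\mathbf P_n = \mathbf U_n\mathbf U_n^*$ is the 2-norm best approximant on $\operatorname{PW}_{\lambda_n}(\mathcal G)$, Parseval gives
\[
E_n(f)^2 \;=\; \|f-\mathbf P_n f\|_2^2 \;=\; \sum_{j=n+1}^N |\hat f(j)|^2,
\]
while simultaneous diagonalization of $\mathbf T_h$ and $\mathbf I$ by $\mathbf U$ yields the spectral identity
\[
\|(\mathbf T_h-\mathbf I)^r f\|_2^2 \;=\; 4^r\sum_{j=1}^N \sin^{2r}\!\bigl(h\sqrt{\lambda_j}/2\bigr)\,|\hat f(j)|^2.
\]
The problem thereby reduces to a purely spectral comparison: for tail indices $j\ge n+1$, recover $|\hat f(j)|^2$ from the weighted tail $4^r\sin^{2r}(h\sqrt{\lambda_j}/2)\,|\hat f(j)|^2$ under the allowed range $|h|\le t:=\lambda_n^{-1/2}$.

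The immediate obstruction is that $\sin^{2r}(h\sqrt{\lambda_j}/2)$ vanishes whenever $h\sqrt{\lambda_j}$ is a multiple of $2\pi$, so no single value of $h$ controls every tail frequency at once. The natural remedy is to replace the pointwise estimate by an average over $h\in[0,t]$. Using
\[
\frac{1}{t}\int_0^t \|(\mathbf T_h-\mathbf I)^r f\|_2^2\,dh \;\le\; \omega_r(f,t)^2
\]
and the change of variables $u = h\sqrt{\lambda_j}/2$, the left-hand side becomes $4^r\sum_j |\hat f(j)|^2\cdot F(a_j)/a_j$, where $a_j := t\sqrt{\lambda_j}/2$ and $F(a) := \int_0^a\sin^{2r}(u)\,du$. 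For $j\ge n+1$ we have $a_j\ge 1/2$; a short monotonicity check (using that $F(a)/a$ is increasing on $[0,\pi/2]$ because $\sin^{2r}$ is, and that the subsequent oscillations on $[\pi/2,\infty)$ stay well above the initial value $2F(1/2)$) supplies the uniform tail bound $F(a_j)/a_j \ge 2F(1/2)$. Discarding the nonnegative head contribution and invoking $\sin u\ge u(1-u^2/6)$ on $[0,1/2]$ to estimate $F(1/2)$ from below then yields a Jackson-type inequality $E_n(f) \le C(r)\,\omega_r(f,t)$ with an explicit constant.

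The hardest step is matching the specific constant $C_r' = \tfrac{4}{\pi}(r+3)^r(r+1)$. The factor $\tfrac{4}{\pi}$ is the signature of a Fejér/Jackson-type kernel normalization (tied to $\int\operatorname{sinc}^2$), while the combinatorial factor $(r+3)^r(r+1)$ encodes the degree and sup-norm of a polynomial majorant of $1-\hat K_r$ on the Laplacian spectrum for a specific Jackson kernel $K_r$, namely the one built in [8] by replacing the simple indicator average above with a weighted integral against a nonnegative even kernel of mass one supported in $[-t,t]$ and matching Fourier-symbol conditions on the discrete spectrum. The crude averaging sketched above already delivers a valid Jackson-type constant (indeed a sharper one for large $r$), but recovering $C_r'$ on the nose requires the explicit kernel construction of [8], which I would adopt verbatim before substituting $t = \lambda_n^{-1/2}$ to conclude.
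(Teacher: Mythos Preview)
The paper does not prove this lemma at all: it is quoted verbatim from reference~[8] (``the authors in [8] proved that the upper bound of $E_n(f)$ can be controlled by $\omega_r(f,t)$, as shown in Lemma~\ref{lem:best_app}''), with no argument given. So there is nothing in the paper to compare your proof against.

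On its own merits, your averaging route is sound for producing \emph{some} Jackson constant: the spectral identities for $E_n(f)^2$ and $\|(\mathbf T_h-\mathbf I)^r f\|_2^2$ are correct, the inequality $\tfrac1t\int_0^t\|(\mathbf T_h-\mathbf I)^r f\|_2^2\,dh\le\omega_r(f,t)^2$ is immediate, and reducing the tail to a uniform lower bound on $F(a)/a$ for $a\ge 1/2$ is the right maneuver. One soft spot: the claim that ``subsequent oscillations on $[\pi/2,\infty)$ stay well above $2F(1/2)$'' is not automatic---$F(a)/a$ genuinely dips after each multiple of $\pi$---and needs a line of justification (for instance, $F(a)\ge\lfloor a/\pi\rfloor\,\pi I_r$ with $I_r=\tfrac1\pi\int_0^\pi\sin^{2r}u\,du$, compared against the exponentially small $2F(1/2)\le (1/2)^{2r}/(2r+1)$). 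You are right that the specific constant $C_r'=\tfrac4\pi(r+3)^r(r+1)$ is an artifact of the Jackson-kernel construction in~[8] and cannot be read off the crude average; deferring to that construction is the honest move.

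For context, the paper does prove its \emph{own} Jackson-type estimate in Proposition~\ref{thm:bound_for_one_freq}, namely $E_n(f)\le C_r\sum_{k=n+1}^N\omega_r(f,\lambda_k^{-1/2})$ with $C_r=(2\sin\tfrac12)^{-r}$, by a more elementary device: set $h=\lambda_k^{-1/2}$, keep only the $j=k$ term in the spectral sum (no averaging needed), and add over $k$. That bound involves a sum of moduli rather than a single one, but the paper's subsequent Remark observes that its constant $C_r$ grows more slowly in $r$ than $C_r'$.
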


In classical approximation theory, there exists an important equivalent relationship between a modulus of smoothness and a $K$-functional. Now, we aim to extend these results to graphs. We first define the $K$-functional of order $r$ on the graph as follows:
\begin{equation}
K_{r}(f, t) := \min _{g \in \mathbb{C}^{N}} \|f-g\|_{2} + \left(\frac{t}{2}\right)^{r} \left\|\mathbf{L}^{r / 2} g\right\|_{2},~~t>0. 
\end{equation}
The second term above measures, to some extent, the smoothness of $g$ on the graph \cite{2013TheEmerging}. Therefore, there shall exist a special relationship between the modulus of smoothness $\omega_{r}(f, t)$ and $K$-functional $K_{r}(f, t)$ on the graph. Specifically, in the next section, we will prove an equivalence relationship between $\omega_{r}(f, t)$ and $K_{r}(f, t)$.
\section{Approximation theory for $K$-functional on the graph} \label{sec:app_theory}



In this section, we will establish a strong equivalence relation between the $K$-functional and the modulus of smoothness of order $r$ for graph signals as demonstrated in Theorem \ref{thm:main}. Prior to this, two lemmas are introduced, which play a crucial role in supporting the proof of Theorem \ref{thm:main}.

\begin{lemma}\label{lem:1}
Let $n=2,3,\dots,N$. For any $f\in\mathbb{C}^{N}$,
$$(2/\pi)^{r}\lambda_{n}^{-r/2}\|\mathbf{L}^{r/2}\mathbf{P}_{n}f\|_{2}\leq
   \|(\mathbf{T}_{\lambda_{n}^{-1/2}}-\mathbf{I})^{r}\mathbf{P}_{n}f\|_{2}\leq
  \lambda_{n}^{-r/2}\|\mathbf{L}^{r/2}\mathbf{P}_{n}f\|_{2}.$$
 Furthermore, 
  \begin{equation}\label{equ:omega_vs_L}
 \lambda_{n}^{-r/2}\|\mathbf{L}^{r/2}\mathbf{P}_{n}f\|_{2}\leq(\pi/2)^{r}\omega_{r}(f,\lambda_{n}^{-1/2}).
  \end{equation}
\end{lemma}

\begin{proof}
Observe from $\mathbf{U}^{*}\mathbf{U}_{n}=\left[
   \begin{array}{c}
     \mathbf{I}_{n} \\
     \mathbf{0} \\
   \end{array}
 \right]$ and $$(\mathbf{T}_{\lambda_{n}^{-1/2}}-\mathbf{I})^{r}=\mathbf{U}\textrm{diag}\Big((e^{i\sqrt{\frac{\lambda_{1}}{\lambda_{n}}}}-1)^r,\dots,(e^{i\sqrt{\frac{\lambda_{n}}{\lambda_{n}}}}-1)^r,\dots,(e^{i\sqrt{\frac{\lambda_{N}}{\lambda_{n}}}}-1)^r\Big)\mathbf{U}^{*}$$
 with the $n\times n$ identity matrix $\mathbf{I}_{n}$, we have
 \begin{align*}
 (\mathbf{T}_{\lambda_{n}^{-1/2}}-\mathbf{I})^{r}\mathbf{P}_{n}\mathbf{f}&=\mathbf{U}_{n}\textrm{diag}\Big((e^{i\sqrt{\frac{\lambda_{1}}{\lambda_{n}}}}-1)^r,\dots,(e^{i\sqrt{\frac{\lambda_{n}}{\lambda_{n}}}}-1)^r\Big)\mathbf{U}_{n}^{*}\mathbf{f}\\
 &=\sum_{j=1}^{n}(e^{i\sqrt{\frac{\lambda_{j}}{\lambda_{n}}}}-1)^r\hat{f}(j)\mathbf{u}_{j}.
 \end{align*}
 Then, 
  \begin{align*}
    \|(\mathbf{T}_{\lambda_{n}^{-1/2}}-\mathbf{I})^{r}\mathbf{P}_{n}f\|_{2}^2
    &=\sum_{j=1}^{n}|e^{i\frac{\sqrt{\lambda_{j}}}{\sqrt{\lambda_{n}}}}-1|^{2r}|\hat{f}(j)|^2\\
    &=\sum_{j=1}^{n}\Big(4\sin^2(\frac{\sqrt{\lambda_{j}}}{2\sqrt{\lambda_{n}}})\Big)^{r}|\hat{f}(j)|^2.
  \end{align*}
  Since $\frac 2 \pi\leq \frac{\sin x} x\leq 1$ for $0\leq x\leq 1/2$, we have  that for $j=1,2,\dots,n$,
  \begin{equation}\label{equ:sin_equivalence}
   \frac{4\lambda_{j}}{\pi^2\lambda_{n}}\leq4\sin^2(\frac{\sqrt{\lambda_{j}}}{2\sqrt{\lambda_{n}}})
    \leq \frac{\lambda_{j}}{\lambda_{n}}.
  \end{equation}
  Combining the fact that   $$\|\mathbf{L}^{r/2}\mathbf{P}_{n}f\|_{2}^{2}=\sum_{j=1}^{n}\lambda_{j}^r|\hat{f}(j)|^2,$$
  we have that
  $$(2/\pi)^{r}\lambda_{n}^{-r/2}\|\mathbf{L}^{r/2}\mathbf{P}_{n}f\|_{2}\leq
   \|(\mathbf{T}_{\lambda_{n}^{-1/2}}-\mathbf{I})^{r}\mathbf{P}_{n}f\|_{2}\leq
  \lambda_{n}^{-r/2}\|\mathbf{L}^{r/2}\mathbf{P}_{n}f\|_{2}.$$

   Finally, to see \eqref{equ:omega_vs_L}, one can just notice from the definition of $\omega_{r}(f,t)$ that 
   \[\|(\mathbf{T}_{s}-\mathbf{I})^{r}\mathbf{P}_{n}f\|_{2}\leq\|(\mathbf{T}_{s}-\mathbf{I})^{r}f\|_{2} \leq \omega_{r}(f,\lambda_{n}^{-1/2})\]
 for any $f\in\mathbb{C}^{N}$ and $|s|\leq \lambda_{n}^{-1/2}$. This proves the lemma.

\end{proof}



\begin{lemma}\label{lem:bound_for_one_freq}
  Let $n=2,3,\dots,N$. For any $f\in\mathbb{C}^{N}$ , we have
  $$\|(\mathbf{P}_{n}-\mathbf{P}_{n-1})f\|_{2}\leq C_{r}\omega_{r}(f,\lambda_{n}^{-1/2}),$$
which further implies that
  \begin{equation}\label{e-l}
  E_{n}(f)\leq C_{r}\sum_{k=n+1}^{N}\omega_{r}(f,\lambda_{k}^{-1/2}),
  \end{equation}
  where $
    C_{r}:=\Big(2\sin(\frac{1}{2})\Big)^{-r}$.
\end{lemma}
\begin{proof}
As in the proof of Lemma \ref{lem:1}, we see from $\mathbf{P}_{n}f-\mathbf{P}_{n-1}f=\hat{f}(n)\mathbf{u}_{n}$ that
   \begin{align*}
     \|(\mathbf{T}_{\lambda_{n}^{-1/2}}-\mathbf{I})^{r}\mathbf{P}_{n}f\|_{2}^2
    &=\sum_{j=1}^{n}|e^{i\frac{\sqrt{\lambda_{j}}}{\sqrt{\lambda_{n}}}}-1|^{2r}|\hat{f}(j)|^2\\
    &=\sum_{j=1}^{n}\Big(4\sin^2(\frac{\sqrt{\lambda_{j}}}{2\sqrt{\lambda_{n}}})\Big)^{r}|\hat{f}(j)|^2
    \geq \Big(4\sin^2(\frac{\sqrt{\lambda_{n}}}{2\sqrt{\lambda_{n}}})\Big)^{r}|\hat{f}(n)|^2 \\
     &=\Big(4\sin^2(\frac{1}{2})\Big)^{r}\|(\mathbf{P}_{n}-\mathbf{P}_{n-1})f\|_{2}^{2}.
  \end{align*}
 It implies that $$\|(\mathbf{P}_{n}-\mathbf{P}_{n-1})f\|_{2}\leq\Big(2\sin(\frac{1}{2})\Big)^{-r}\|(\mathbf{T}_{\lambda_{n}^{-1/2}}-\mathbf{I})^{r}\mathbf{P}_{n}f\|_{2}.$$
  Then, according to the definition of $\omega_{r}(f,t)$,
  \begin{align*}
    \omega_{r}(f,\lambda_{n}^{-1/2})=\sup_{|h|\leq\lambda_{n}^{-1/2}}\|(\mathbf{T}_{h}-\mathbf{I})^{r}f\|_{2}
    \geq \|(\mathbf{T}_{\lambda_{n}^{-1/2}}-\mathbf{I})^{r}f\|_{2}.
  \end{align*}
  Thus, $$\|(\mathbf{P}_{n}-\mathbf{P}_{n-1})f\|_{2}\leq\Big(2\sin(\frac{1}{2})\Big)^{-r}\omega_{r}(f,\lambda_{n}^{-1/2}),$$


 Next, bounding the best approximation error $$E_{n}(f)=\min\limits_{g\in PW_{\lambda_{n}}(\mathcal{G})}\|f-g\|_{2}=\|f-\mathbf{P}_{n}f\|_{2},$$ we have
  \begin{align*}
E_{n}(f)                                          &\leq\sum_{k=n+1}^{N}\|(\mathbf{P}_{k}-\mathbf{P}_{k-1})f\|_{2}\\
&\leq C_{r}\sum_{k=n+1}^{N}\omega_{r}(f,\lambda_{k}^{-1/2}).
\end{align*}
This proves Lemma \ref{lem:bound_for_one_freq}.
\end{proof}


{\bf Remark 1:} Lemma~\ref{lem:bound_for_one_freq} introduces a new upper bound for $E_{n}(f)$ in terms of $\omega_{r}(f,\lambda_{n}^{-1/2})$ as expressed in \eqref{e-l}. In the particular case of $n=N-1$, Lemma~\ref{lem:bound_for_one_freq} gives
$$E_{N-1}(f)\leq C_{r}\omega_{r}(f,\lambda_{N}^{-1/2})\leq C_{r}\omega_{r}(f,\lambda_{N-1}^{-1/2}).$$
When $r$ increases sufficiently, as $C_r \leq C_r'$, this result demonstrates an improvement over the estimation provided by Lemma~\ref{lem:best_app}

Now, we are in the position to show the equivalent relation between $K_{r}(f,t)$ and $\omega_{r}(f,t)$.

\begin{theorem}\label{thm:main}
For any $f\in\mathbb{C}^{N}$ and $t>0$, there exists a constant $C_1$ such that
$$C_{1}K_{r}(f,t)\leq\omega_{r}(f,t)\leq2^{r}K_{r}(f,t),~~\forall~t>0.$$
  \end{theorem}

\begin{proof}
 We first prove the upper bound of $\omega_{r}(f,t)$ in terms of $K$-functional. According to the properties of $\omega_{r}(f,t)$, for any $g\in\mathbb{C}^{N}$,
        \begin{align*}
              \omega_{r}(f,t)&\leq\omega_{r}(f-g,t)+\omega_{r}(g,t)\\
                                 &\leq2^{r}\|f-g\|_{2}+t^r\omega_{0}(\mathbf{L}^{r/2}g,t)\\
                                 &\leq2^{r}\Big(\|f-g\|_{2}+(\frac{t}{2})^{r}\|\mathbf{L}^{r/2}g\|_{2}\Big).
        \end{align*}
        Taking infimum over $g\in\mathbb{C}^{N}$ yields $\omega_{r}(f,t)\leq2^{r}K_{r}(f,t)$.
        
    Next, we prove the lower bound of $\omega_{r}(f,t)$ in terms of $K$-functional. The proof is provided in the following three cases: 
    \begin{itemize}
      \item [(i)] if $0< t\leq\lambda_{N}^{-1/2}$,  we have from $\mathbf{P}_{N}f=f$ that 
      \begin{align*}
      K_{r}(f,t)&\leq\|f-\mathbf{P}_{N}f\|_{2}+(\frac{t}{2})^{r}\|\mathbf{L}^{r/2}\mathbf{P}_{N}f\|_{2} \\
                & =(\frac{t}{2})^{r}\|\mathbf{L}^{r/2}\mathbf{P}_{N}f\|_{2}
                 \leq (\frac{\pi t\lambda_{N}^{1/2}}{4})^{r}\omega_{r}(f,\lambda_{N}^{-1/2}),
    \end{align*}
    where the last equality follows \eqref{equ:omega_vs_L}. Then, according to the first property of $\omega_{r}(f,t)$, we have
    \begin{align*}
      K_{r}(f,t)&\leq (\frac{\pi t\lambda_{N}^{1/2}}{4})^{r}\omega_{r}(f,\lambda_{N}^{-1/2})\\
                & \leq (\frac{\pi t\lambda_{N}^{1/2}}{4})^{r}(\frac{1}{t\lambda_{N}^{1/2}}+1)^{r}\omega_{r}(f,t)\\
                & \leq (\frac{\pi}{4}+\frac{\pi t\lambda_{N}^{1/2}}{4})^{r}\omega_{r}(f,t)\leq (\frac{\pi}{2})^{r}\omega_{r}(f,t).
    \end{align*}
      \item [(ii)] if $\lambda_{N}^{-1/2}\leq t\leq\lambda_{2}^{-1/2}$,
      then there exist some $n\in\{3,4,\dots,N\}$ such that $\lambda_{n}^{-1/2}\leq t\leq\lambda_{n-1}^{-1/2}$. It follows that $t\lambda^{1/2}_{n}\leq M$, where $$M:=\max\limits_{n=3,4,\dots,N}(\frac{\lambda_{n}}{\lambda_{n-1}})^{1/2}.$$ Hence
    \begin{align*}
      K_{r}(f,t)&\leq\|f-\mathbf{P}_{n}f\|_{2}+(\frac{t}{2})^{r}\|\mathbf{L}^{r/2}\mathbf{P}_{n}f\|_{2} \\
                & = \textrm{E}_{n}(f)+(\frac{t}{2})^{r}\|\mathbf{L}^{r/2}\mathbf{P}_{n}f\|_{2}.
    \end{align*}
    Then, according to Lemma \ref{lem:best_app} and \eqref{equ:omega_vs_L},
    \begin{align*}
    K_{r}(f,t)     & \leq C'_{r}\omega_{r}(f,\lambda_{n}^{-1/2})
                +(\frac{\pi t\lambda_{n}^{1/2}}{4})^{r}\omega_{r}(f,\lambda_{n}^{-1/2})\\
                & \leq \Big(C'_{r}+(\frac{\pi M}{4})^{r}\Big)\omega_{r}(f,\lambda_{n}^{-1/2})\leq \Big(C'_{r}+(\frac{\pi M}{4})^{r}\Big)\omega_{r}(f,t).
    \end{align*}
    \item [(iii)] if $t\geq\lambda_{2}^{-1/2}$, from the fact that $\mathbf{L}^{r/2}\mathbf{P}_{n}f=\mathbf{L}^{r/2}(\hat{f}(1)\mathbf{u}_{1})=\hat{f}(1)0\mathbf{u}_{1}=\mathbf{0}$ we have
    \begin{align*}
      K_{r}(f,t) &\leq \|f-\mathbf{P}_{1}f\|_{2}+(\frac{t}{2})^{r}\|\mathbf{L}^{r/2}\mathbf{P}_{1}f\|_{2}= \|f-\mathbf{P}_{1}f\|_{2}\\
                 & \leq \|f-\mathbf{P}_{2}f\|_{2}+\|(\mathbf{P}_{1}-\mathbf{P}_{2})f\|_{2} 
                  = E_{2}(f)+\|(\mathbf{P}_{1}-\mathbf{P}_{2})f\|_{2} \\
                  &\leq (C'_{r}+C_{r})\omega_{r}(f,\lambda_{2}^{-1/2})\leq (C'_{r}+C_{r})\omega_{r}(f,t),
    \end{align*}
     where the third inequality follows Lemma \ref{lem:best_app} and Lemma~\ref{lem:bound_for_one_freq}.
    \end{itemize}
    The proof of Theorem \ref{thm:main} is complete.
\end{proof}


In order to apply the deduced approximation results to GCNs, we extend the above results to graph signals with multiple channels and general Laplacian $\tilde{\b L}$.
Here $\tilde{\mathbf L}\in \RR^{N\times N}$ is symmetric positive semi-definite such as the normalized Laplacian matrix  $\mathbf{I} - \mathbf{D}^{-1/2} \mathbf{A} \mathbf{D}^{-1/2}$. Then the graph translation operator can be defined in the same way with \eqref{equ:translation} by replacing $\b L$ as $\tilde{\b L}$. For the graph signal $\mathbf{F}=(f_{1},f_{2},\dots,f_{m})\in\mathbb{C}^{N\times m}$ 
with $m$ channels, then we define its moduli by
$$\mathcal{W}_{r}(\mathbf{F},t; \tilde{\b L}):=\sum_{j=1}^{m}\omega_{r}(f_{j},t),~~t>0,$$
and correspondingly $K$-functional by
$$\mathcal{K}_{r}(\mathbf{F},t; \tilde{\b L}):=\inf_{g_{1},g_{2},\dots,g_{m}\in\mathbb{C}^{N}}\sum_{j=1}^{m}\|f_{j}-g_{j}\|_{2}+(\frac{t}{2})^{r}\sum_{j=1}^{m}\|\tilde{\mathbf{L}}^{r/2}g_{j}\|_{2},~~t>0.$$

With the similar argument as Theorem~\ref{thm:main}, we have
\begin{theorem}\label{thm:main_matrix}
  Let $\tilde{\b  L}$ be a symmetric positive semi-definite operator and $r\in\mathbb{N}$. With respect to $\tilde{\b L}$, for any $\mathbf{F}\in\mathbb{C}^{N\times m}$ and $t>0$, there exists a constant $C_{1}$ such that
 $$C_{1}\mathcal{K}_{r}(\mathbf{F},t;\tilde{\b L})\leq\mathcal{W}_{r}(\mathbf{F},t;\tilde{\b L})\leq2^{r}\mathcal{K}_{r}(\mathbf{F},t;\tilde{\b L}).$$
\end{theorem}

\section{A lower bound of approximation capability of GCNs} \label{sec:lower_bound}

In this section, we investigate the approximation capability of Graph Convolutional Networks (GCNs) by using the estimation presented in Theorem~\ref{thm:main}. Furthermore, we will provide a theoretical insight about over-smoothing phenomenon.

The GCNs under our consideration are structured with a graph filter $\mathbf{H}\in\mathbb{R}^{N\times N}$ and channels $\{m_{k}\}_{k=0}^{K}$
\begin{equation}\label{equ:GCN}
\mathbf{F}^{(k)}=\sigma\big(\mathbf{H}\mathbf{F}^{(k-1)}\mathbf{W}^{(k-1)}\big),
\end{equation}
for $k=1,2,\dots,K$, where $\sigma(x)=\max\{x,0\}$ is the ReLU activation function, $\mathbf{F}^{(k)}\in\mathbb{R}^{N\times m_{k}}$ is the output of the $k$-th layer (with the input feature $\mathbf{F}^{(0)}\in\mathbb{R}^{N\times m_{0}}$) and $\mathbf{W}^{(k-1)}\in\mathbb{R}^{m_{k-1}\times m_{k}}$ is the learnable matrix of weights of the $k$-th layer. 

This architecture is widely adopted in GCNs. Particularly, in many cases, the filter $\mathbf{H}$ is chosen to be a polynomial of the graph Laplacian $\mathbf{L}$ or the normalized Laplacian, as mentioned in \cite{2019Stable-GCN} and \cite{Li2018}. The filter, $\mathbf{H}$, possesses a low-frequency eigenvector that comprises nonnegative elements and does not oscillate around zero. With such filters, GCNs typically achieve good performances at limited depths but tend to suffer from the over-smoothing problem. In the following theorem, it is demonstrated that this occurs when low-frequency components are present and high-frequency eigenvalues are less than one. Recall the Frobenius norm of a matrix $\mathbf{A}=(A_{ij})\in\mathbb{R}^{m\times n}$ is given by $\|\mathbf{A}\|_{F}=(\sum_{i=1}^{m}\sum_{j=1}^{n}A_{ij}^{2})^{1/2}$.

 \begin{theorem}\label{thm:1ddecay}
 Suppose $\b F^{(0)}\in \RR^{N \times m_0}$ is a graph input signal with $N$ nodes and $m_0$ channels, and $\{\b F^{(k)}\}$ is  generated by a GCN \eqref{equ:GCN} with a symmetric filter $\mathbf{H}$ and weights $\b W^{(k-1)}\in \RR^{m_{k-1}\times m_{k}}$ for $k=1,2,\ldots,K$. Let $\{ (\mathbf{h}_i,\mu_i) \}_{i=1}^N $ form an orthonormal eigenvector-eigenvalue pairs of $\mathbf H$. 
 If $\mathbf{H}$ has a low frequency eigenvector with nonnegative components, denoted by $\mathbf{h}_1$, and $\|\b W^{(k-1)}\|_F\leq 1$ for all $k=1,\ldots, K$, then the high frequency parts of $\b F^{(K)}$ can be bounded by 
    \begin{align}\label{eq:lower-k+1}
    \begin{aligned}
        \sum_{j=1}^{m_K}\sum_{i=2}^N \left | \langle f_j^{(K)}, \b h_i\rangle \right |^2 
        \leq |\mu_{high}|^{2K} \|\b F^{(0)}\|_F^2,
    \end{aligned}
    \end{align}
    where  $\b F^{(K)} = (f_1^{(K)}, \dots, f_{m_K}^{(K)})\in \RR^{M\times m_K}$ and $|\mu_{high}|=\max\{|\mu_2|,\ldots,|\mu_N| \}$.
    
    Furthermore, if $|\mu_{high}|<1$, then for any $\b v \perp \b h_1$, $$\sum_{j=1}^{m_K}\left| \langle f_j^{(K)}, \b v \rangle \right | \rightarrow 0 , \quad K \rightarrow \infty. $$
\end{theorem}

\begin{proof}
For any $f\in\mathbb{R}^{N}$, we define
        $f_+:=\sigma(f)$ with $\sigma$ acting componentwise, $f_{-}:=f-f_{+}$
      and $\mathbf{P}f:= f-\langle f, \b h_1 \rangle \b h_1$ as the orthogonal projection onto span$\{\mathbf{h}_{j}\}_{j=2}^{N}$. We claim that
      \begin{equation*}\label{eq:lower_p}
    \|\mathbf{P}f\|_{2}^{2}\geq\|\mathbf{P}f_{+}\|_{2}^{2}
          =\|\mathbf{P}\sigma(f)\|_{2}^{2}, \quad \forall f\in\mathbb{R}^{N}.
      \end{equation*}
      We assume $\|\mathbf{P}f_{+}\|_{2}\neq 0$, otherwise \eqref{eq:lower_p} holds true trivially.
       Then, take an orthonormal basis $\{\mathbf{v}_1, \mathbf{v}_2 ,\dots, \mathbf{v}_{N-1}\}$ of the orthogonal complement of span$\{\mathbf{h}_{1}\}$, where
   \begin{align*}
        \mathbf{v}_1 := \frac{\P f_{+}}{\left \|\P f_{+} \right \|_2} = \frac{f_+ - \langle f_+, \b h_1 \rangle \b h_1}{\left \|\P f_{+} \right \|_2}.
    \end{align*}
    Observe from the special form $\sigma(t)=\max\{t,0\}$ of ReLU that for each $j\in\{1,2,\dots,N\}$, either $(f_{+})_{j}=\sigma(f_{j})$ or $(f_{-})_{j}=f_{j}-\sigma(f_{j})=\min\{f_{j},0\}$ vanishes. Hence $\langle f_{+},f_{-}\rangle=0$. Thus, it is easy to verify that
    \begin{align*}
        \langle f_+, \mathbf{v}_1 \rangle = \|\P f_+\|_2\geq 0~\text{and}~
        \langle f_-, \mathbf{v}_1 \rangle = -\frac{\langle f_+, \b h_1 \rangle \langle f_{-}, \b h_1 \rangle}{\left \|\P f_{+} \right \|_2} \geq 0,
    \end{align*}
    where we have used the assumption $\mathbf{h}_{1}\in\mathbb{R}_{+}^{N}$ and the observation $f_{-}\in\mathbb{R}_{-}^{N}$ which lead to $\langle f_{+},\mathbf{h}_{1}\rangle\geq0$ and $\langle f_{-},\mathbf{h}_{1}\rangle\leq0$.
    At this point, since $f_{+} = \sigma(f)$, we have
    \begin{align*}
    \nonumber    \|\P f\|^2_2 
        &\geq \left | \langle f, \mathbf{v}_1 \rangle \right|^2
        = \left|\langle f_{+}, \mathbf{v}_1 \rangle + \langle f_{-}, \mathbf{v}_1 \rangle\right|^2 \\
        &\geq|\langle f_{+},\mathbf{v}_{1}\rangle|^{2}= \|\P f_{+}\|^2_2 = \|\P\sigma(f)\|^2_2,
    \end{align*}
    where the first inequality is obtained by the Plancherel Theorem. This verifies our claim.
      
    Now, we are ready to show \eqref{eq:lower-k+1}. 
   Let $k\in\{1,2,\dots,K\}$. Since $\P$ is the orthogonal projection onto span$\{\mathbf{h}_{j}\}_{j=2}^{N}$, it is easy to see that
 \begin{align*}
    \begin{aligned}
        &\sum_{j=1}^{m_{k}}\sum_{i=2}^N \left | \langle f_j^{(k)}, \b h_i\rangle \right |^2 \\
        =&\sum_{j=1}^{m_{k}}\|\mathbf{P}f_{j}^{(k)}\|_{2}
        ^{2}= \sum_{j=1}^{m_k} \|\P\sigma (\b H\b F^{(k-1)} \b W^{(k-1)})_{j}\|_2^2 \\
        \leq& \sum_{j=1}^{m_{k}}\|\P (\b H\b F^{(k-1)} \b W^{(k-1)})_{j}\|_2^2= \sum_{j=1}^{m_k} \|\P \b H\b F^{(k-1)}  (\b W^{(k-1)})_{j}\|_2^2.
    \end{aligned}
    \end{align*}
     Viewing $\b H\b F^{(k-1)} (\b W^{(k-1)})_j$ as a vector in $\RR^N$, we see that the norm of its projection $\P \left(\b H\b F^{(k-1)} (\b W^{(k-1)})_j \right)$ can be expressed as $\|\P \b H\b F^{(k-1)} (\b W^{(k-1)})_j\|_2^2 = \sum_{i=2}^N |\langle \b h_i, \b H\b F^{(k-1)} (\b W^{(k-1)})_j \rangle|^2 =  \sum_{i=2}^N | \b h_i^\top  \b H\b F^{(k-1)} (\b W^{(k-1)})_j |^2  $. Hence, by the Cauchy-Schwarz inequality, 
    \begin{align*}
        &\sum_{j=1}^{m_k} \|\P \b H\b F^{(k-1)} (\b W^{(k-1)})_j\|_2^2 = \sum_{i=2}^N\sum_{j=1}^{m_k} |\b h_i^\top  \b H\b F^{(k-1)} (\b W^{(k-1)})_j |_2^2 \\
        \leq &\sum_{i=2}^N\sum_{j=1}^{m_k} \|\b h_i^\top  \b H\b F^{(k-1)} \|^2_2 \| (\b W^{(k-1)})_j\|_2^2 
        = \sum_{i=2}^N \|(\b H \b F^{k-1})^\top \b h_i \|^2_2 \| \b W^{(k-1)}\|_F^2.
    \end{align*}
     
     By our assumption, $\|\b W^{(k)}\|_F \leq 1$. Also, $\|(\b H \b F^{k-1})^\top \b h_i\|_2^2 = \| ( \b F^{k-1})^\top \b H^\top \b h_i  \|_2^2 = \| ( \b F^{k-1})^\top \mu_i \b h_i  \|_2^2 \leq |\mu_{high}|^2 \| ( \b F^{k-1})^\top \b h_i  \|_2^2 $.
    
Thus,
    \begin{align*}
        &\sum_{j=1}^{m_{k}}\sum_{i=2}^N \left | \langle f_j^{(k)}, \b h_i\rangle \right |^2 \leq  |\mu_{high}|^2 \sum_{i=2}^N \| ( \b F^{k-1})^\top \b h_i  \|_2^2 \\
        =& |\mu_{high}|^2 \sum_{j=1}^{m_{k-1}}\sum_{i=2}^N \left | \langle  f_j^{(k-1)}, \b h_i\rangle \right |^2 .
    \end{align*}


    Using this iteration relation verifies \eqref{eq:lower-k+1}.

    The second statement is a trivial consequence of \eqref{eq:lower-k+1}. This proves Theorem~\ref{thm:1ddecay}.
    
\end{proof}

We now apply Theorem~\ref{thm:1ddecay} to a study of approximation abilities of GCNs.
We use the following high-pass filter
$$\tilde{\b H}=[\b h_1,\tilde{\b h}_2,\ldots, \tilde{\b h}_N]\left(
  \begin{array}{cccc}
    0 & 0 & \cdots & 0 \\
    0 & \tilde{\mu}_2 & \ddots & 0 \\
    \vdots & \ddots & \ddots & \vdots \\
    0 & \cdots & \ddots & \tilde{\mu}_N \\
  \end{array}\right)[\b h_1, \tilde{\b h}_2\ldots, \tilde{\b h}_N]^T$$ 
to extract the high frequency part of a signal $\b F$, which has an orthonoraml basis $\{\b h_1, \tilde{\b h}_i, i=2,\dots,N\}$, with low-frequncy eigenvector the same as that of $\b H$ and high frequency eigenvalues $|\tilde{\mu}_i| \leq \tilde{\mu}_{high}$ for any $i$. Theorem~\ref{thm:1ddecay} implies high frequency part is squashed to zero and the following theorem provides a lower bound of GCN \eqref{equ:GCN}, indicating that there always exists approximation error which depends on the high frequency part of the signal, characterized by $\tilde{\b H}$. 

{\bf Remark 2:}
The selection of $\tilde{\mathbf{H}}$ is specifically to have the same low frequency eigenvector as that of $\mathbf{H}$. The choice of its high-frequency components do not affect our results. From this perspective, if the low-frequency eigenvector of $\mathbf{H}$ is the constant vector $(1/\sqrt{N}, \ldots, 1/\sqrt{N})^T$, then $\tilde{\mathbf{H}}$ can simply be the Laplacian $\mathbf L$, and the following modulus is defined with respect to the Laplacian.

\begin{theorem}[Lower bounds for approximating target signals.]\label{thm:lowerbound}
  Let  $\b F$ be a graph signal with $m$ channels and $\{\mathbf{F}^{(k)}\}_{k=0}^{K}$ be defined by \eqref{equ:GCN} with filter $\b H$. Under the same assumption as in Theorem~\ref{thm:1ddecay} and $m_K=m$,
     the following lower bound holds for any $t>0$ and  $r\in \mathbb{Z}_{+}$,
    \begin{align*}\label{eq:low2}
         \|\b F-\b F^{(K)}\|_F
        \geq  2^{-r}m^{-1/2} \mathcal{W}_r(\b F,t;\tilde{\b H}) - 2^{-r } t^r|\mu_{high}|^{K} |\tilde{\mu}_{high}|^{r/2}  \|\b F^{(0)}\|_F.
    \end{align*}
       In particularly, if $|\mu_{high}|< 1$, 
    \begin{equation*}\label{eq:low2_infty}
        \lim\limits_{K\rightarrow \infty} \|\b F-\b F^{(K)}\|_F \geq 2^{-r} m^{-1/2}\mathcal{W}_r(\b F,t;\tilde{\b H}).
    \end{equation*}
\end{theorem}


\begin{proof}
       Noticing that for $\tilde{\b H}$, the eigenvalue associating with $\b h_1$ is $0$, we have by Theorem~\ref{thm:1ddecay} for $\tilde{\b H}^{r/2}\b F^{K}:= ( \tilde{\b H}^{r/2} f_1^{K}, \dots,  \tilde{\b H}^{r/2} f_m^{K} )$,
    \begin{align*}
        &\|\tilde{\mathbf{H}}^{r/2} \b F^{(K)}\|^2_F 
        = \sum_{j=1}^m \|\tilde{\mathbf{H}}^{r/2} f_j^{(K)}\|^2_2 
        = \sum_{j=1}^m \left \|\sum_{i=1}^N \tilde{\mu}_i^{r/2} \langle f_j^{(K)}, \tilde{\b h}_i\rangle \tilde{\b h}_i \right \|^2_2 \\
        =& \sum_{j=1}^m \sum_{i=2}^N \left | \tilde{\mu}_i^{r/2} \langle f_j^{(K)}, \tilde{\b h}_i\rangle \right |^2 
        \leq |\tilde{\mu}_{high}|^r \sum_{j=1}^m\sum_{i=2}^N \left | \langle f_j^{(K)}, \tilde{\b h}_i\rangle \right |^2 \\
       =& |\tilde{\mu}_{high}|^r \sum_{j=1}^m\sum_{i=2}^N \left | \langle f_j^{(K)}, \b h_i\rangle \right |^2 
        \leq |\tilde{\mu}_{high}|^{r} |{\mu}_{high}|^{2K} \| \b F^{(0)}\|^2_F,
    \end{align*}
    Now, we compare $\|\b F-\b F^{(K)}\|_F + \left(\frac{t}{2}\right)^r\|\tilde{\b H}^{r/2} \b F^{(K)} \|_F$ with $\|\b F-\b G\|_F + \left(\frac{t}{2}\right)^r\|\tilde{\b H}^{r/2}\b G\|_F$ when the matrix $\b G$ runs over $\RR^{N\times m}$, and bound the difference between $\b F - \b F^{(K)}$ from below, for any $t>0$,
    \begin{align*}
        &\|\b F-\b F^{(K)}\|_F \\
        \geq& \inf_{\b G\in \RR^{N\times m}} \left \{ \|\b F-\b G\|_F + \left(\frac{t}{2}\right)^r\|\tilde{\b H}^{r/2}\b G\|_F \right\} - \left(\frac{t}{2}\right)^r \|\tilde{\b H}^{r/2}\b F^{(K)}\|_F \\
        \geq & m^{-1/2}\mathcal{K}_{r}(\mathbf{F},t;\tilde{\b H}) - \left(\frac{t}{2}\right)^r |\tilde{\mu}_{high}|^{r/2} |\mu_{high}|^{K}\|\b F^{(0)}\|_F ,
    \end{align*}
    where we have used the bounds $\sum_{j=1}^m \|f_j - g_j\|_2 \leq \sqrt{m} \sqrt{\sum_{j=1}^m \|f_j - g_j\|_2^2} = \sqrt{m} \|\b F - \b G\|_F$ and $\sum_{j=1}^m \| \tilde{\b H}^{r/2} g_j \|_2 \leq \sqrt{m} \| \tilde{\b H}^{r/2} \b G \|_F$. It follows by applying Theorem~\ref{thm:main_matrix} with $\tilde{\b L} = \tilde{\b H}$ that 
    \begin{align*}
        \|\b F-\b F^{(K)}\|_F
        \geq  2^{-r} m^{-1/2}\mathcal{W}_r(\b F,t;\tilde{\b H}) - \left(\frac{t}{2}\right)^r |\tilde{\mu}_{high}|^{r/2} |\mu_{high}|^{K}\|\b F^{(0)}\|_F.
    \end{align*}
\end{proof}



{\bf Remark 3:} Considering the relationship between GCNs and regular CNNs \cite{Survey_ZhangCQ,NN2023}, GCNs can be conceptualized as a variant where each channel employs a uniform kernel. The role of channels in regular CNNs was studied by a time-frequency analysis method in \cite{Zhou2018}.

 Since  $\omega_{0}(f,t) = \|f\|_2$ 
 for any $f \in \RR^N$ and $t>0$, applying Theorem~\ref{thm:lowerbound} with $r=0$, we can have the following corollary.
 \begin{corollary}
    Taking the same assumption as in Theorem~\ref{thm:lowerbound}, we have that for any graph signal $\b F=(f_1,f_{2}\ldots,f_m)\in \RR^{N\times m}$,
    \[ \|\b F-\b F^{(K)}\|_F\geq m^{-1/2}\sum_{j=1}^m\|f_{j}\|_2 - |\mu_{high}|^{K} \|\b F^{(0)}\|_F. \]
\end{corollary}

{\bf Remark 4:} Several widely used filters align with our assumptions, including ${\bf H}_{gcn} := ({\bf D} + {\bf I})^{-1/2}({\bf A} + {\bf I})({\bf D} + {\bf I})^{-1/2}$ and ${\bf H}_{sym} := {\bf I} - \alpha {\bf D}^{-1/2}{\bf L}{\bf D}^{-1/2}$. These filters have eigenvalues within the range $(-1,1]$, making them prone to the over-smoothing issue. The validity of this conclusion is further corroborated by numerical experiments discussed in the subsequent section.

\section{Experiments} \label{sec:Exp}
In this section, we conduct experiments to verify our theoretical results. In the experiments, we use the stochastic block model (SBM) with two classes characterised by $\mathbf{y}\in\{\pm{1}\}^{N}$ to generate the graph structure, where edges $(i,j)$ are added independently with probability 
$p\in(0,1]$ if $y_{i}=y_{j}$, and with probability $q\in[0,p)$ if $y_{i}\neq y_{j}$. Then, we obtain a random binary adjacency matrix $\mathbf{A}\in\mathbb{R}^{N\times N}$. Furthermore, the node features $\mathbf{F}^{(0)}$ are sampled from a Gaussian mixture model (GMM), i.e., 
$$\mathbf{F}^{(0)}=\mathbf{y}\boldsymbol{\mu}^{\top}+\boldsymbol{\epsilon}\in\mathbb{R}^{N\times N},$$
where $\boldsymbol{\mu}\in\mathbb{R}^{N}$ and $\boldsymbol{\epsilon}=(\epsilon_{ij})\in\mathbb{R}^{N\times N}$ with $\epsilon_{ij}\stackrel{i.i.d}{\sim}\mathcal{N}(0,\sigma^{2})$ and $\sigma\in(0,+\infty)$. Unless otherwise stated, we set $p=0.8$, $q=0.3$, $\sigma=10$, and $\boldsymbol{\mu}$ is generated randomly.

For simplicity, the experiments are conducted on feed-forward GCNs to verify our theoretical findings (i.e., Theorem \ref{thm:1ddecay}) for over-smoothing problems. Therefore, in the experiments,  we randomly select the weight matrix $\b W^{(k)}$ of \eqref{equ:GCN} based on the normal distribution $\mathcal{N}(0, 1)$ and then normalize it to satisfy that $\|\b W^{(k)}\|_F = 10$. Then, for each layer $k$, we evaluate the high frequency energy of $\b F^{(k)}$ according to the filter $\b H$:
$$E_h(\b F^{(k)}):= \sum_{j=1}^{m_k}\sum_{i=2}^N \left | \langle f_j^{(k)}, \b h_i\rangle \right |^2,$$
where $\mathbf{h}_{i}$ is the eigenvector of filter $\mathbf{H}$ and we set $m_k=N$ for all $k$.




\begin{figure}[htbp] 
\centering 
\includegraphics[width=0.7\textwidth]
{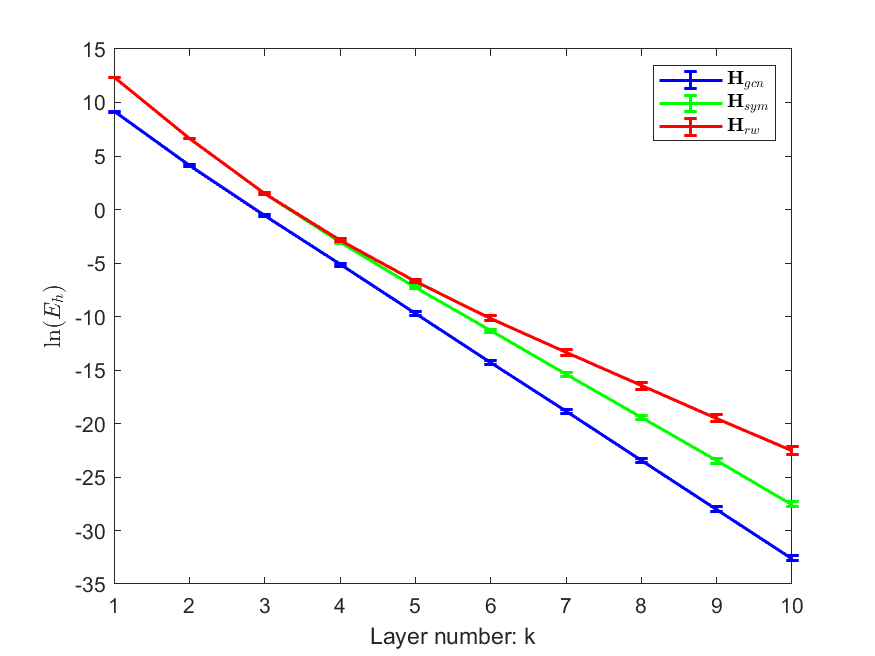}
\caption{Logarithmic high frequency energy of GCNs with different filters.} 
\label{fig:filters-log} 
\end{figure}



\subsection{Over-smoothing problems in classical GCNs}
First, we test the effect of three different filters on the over-smoothing problem in classical GCNs, namely: $\b H_{gcn}$, $\b H_{sym}$, and $\b H_{rw}=\b I - \alpha \b D^{-1}\b L$ with $\alpha = 0.75$. 
To this end, we conduct the experiments over 1000 independent trials on the graph of size $N=1000$. In the experiments, we compute $E_{h}(\mathbf{F}^{(k)})$ for each layer and then plot the results of $\ln(E_h)$ together with its error bar as a function of layer $k$, as shown in Figure \ref{fig:filters-log}. From Figure \ref{fig:filters-log}, we can see that the value of high frequency energy decays exponentially, which coincides with Theorem~\ref{thm:1ddecay}. This causes the fact that the values of high frequency energy almost equal zero as $k$ increases.

\begin{figure}[htbp] 
\centering 
\includegraphics[width=0.7\textwidth]
{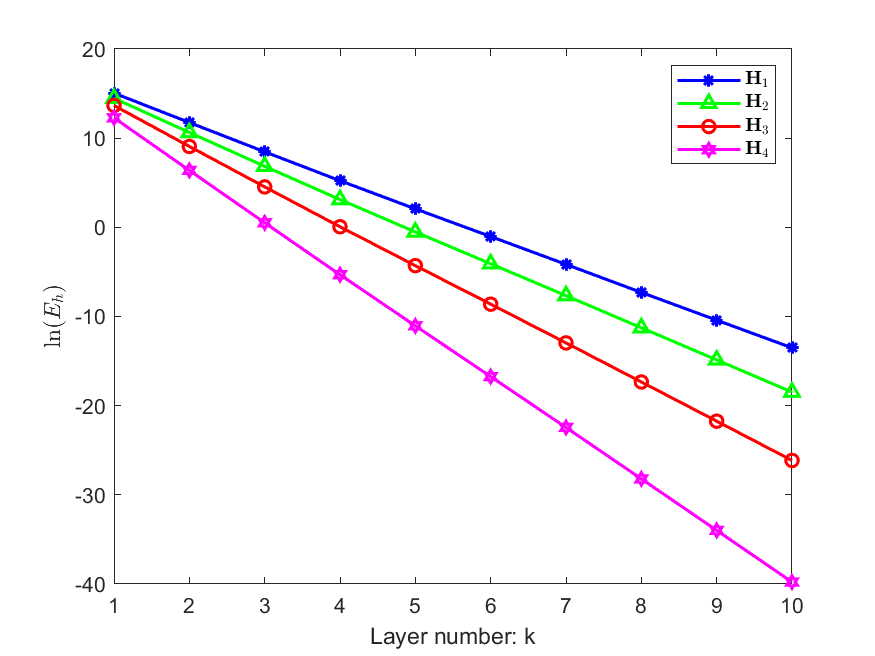}
\caption{Logarithmic high frequency energy of GCNs with $\b H_j, j=1,2,3,4$.} 
\label{fig:filters-Hi} 
\end{figure}

\subsection{Decay rate with respect to $|\mu_{high}|$}
According to Theorem \ref{thm:1ddecay}, we have that $$E_h(\mathbf{F}^{(K)}) \leq  |\mu_{high}|^{2K}\|\mathbf{F}^{(0)}\|_{F}^{2},$$ that is, the decay rate of the upper bound of $E_h$ is dependent on $\mu_{high}$. To show this, we next conduct the experiments on the following four filters $\b H_j$, $j=1,\dots,4$:
$$\b H_j=[\b h_1,\ldots, \b h_N]\left(
  \begin{array}{cccc}
    1 & 0 & \cdots & 0 \\
    0 & a_j & \ddots & 0 \\
    \vdots & \ddots & \ddots & \vdots \\
    0 & \cdots & \ddots & a_j \\
  \end{array}\right)[\b h_1,\ldots, \b h_N]^T,$$ 
where $a_j = 1-0.25\times (j-1)<1$ and $\b h_1,\dots, \b h_N$ are chosen to be eigenvectors from $\b H_{gcn}$. At this point, $\mu_{high}$ of each $\mathbf{H}_{j}$ is $a_{j}$. In the experiments, we compute $E_{h}$ for each layer $k$ and then plot $\ln(E_{h})$ as a function of $k$, which can be seen in Figure~\ref{fig:filters-Hi}. From Figure~\ref{fig:filters-Hi} we observe that for a fixed $k$, $\mathbf{H}_{j}$ with smaller $\mu_{high}$ has the smaller $E_{h}$. Furthermore, it can be seen that as $\mu_{high}<1$ goes smaller, the curve of $\mathbf{H}_{j}$ becomes more steep, i.e., $E_{h}$ of $\mathbf{H}_{j}$ with a smaller $\mu_{high}$ decays faster, which is consistent with our Theorem \ref{thm:1ddecay}.


\subsection{Exploration to alleviate over-smoothing problems with skip-connections? }
In the existing works, a large number of results have shown that the skip-connection architecture helps to alleviate the over-smoothing problem of the network. In this part, based on our Theorem \ref{thm:1ddecay}, 
we conduct experiment to evaluate the ability of the following three skip-connection GCNs to alleviate the over-smoothing problem, namely: ResGCN, APPNP, and GCNII, which are defined iteratively as follows: 
\begin{align*}
    &\mathbf{F}^{(k+1)}_{\text{ResGCN}} = \sigma \left( \mathbf{H}_{gcn}\b F^{(k) }_{\text{ResGCN}} \b W^{(k)} \right) + \b F^{(k)}_{\text{ResGCN}}, \\
    &\mathbf{F}^{(k+1)}_{\text{APPNP}} = (1-\alpha_k)  \mathbf{H}_{gcn}\b F^{(k) }_{\text{APPNP}}  + \alpha_k \b F^{(0) } \b W^{(k)}, \\
    &\mathbf{F}^{(k+1)}_{\text{GCNII}} = \sigma \left( \left((1-\alpha_k)\mathbf{H}_{gcn}\b F^{(k) }_{\text{GCNII}} + \alpha_k \b F^{(0) } \right) \left(\beta_k\b W^{(k)} + (1-\beta_k)\b I \right) \right),
\end{align*}
and for the final layer,
\begin{align*}
    &\mathbf{F}^{(K)}_{\text{ResGCN}} = \mathbf{H}_{gcn}\b F^{(K-1)}_{\text{ResGCN}} \b W^{(K-1)} + \b F^{(K-1)}_{\text{ResGCN}}, \\
    &\mathbf{F}^{(K)}_{\text{APPNP}} = (1-\alpha_{K-1})  \mathbf{H}_{gcn}\b F^{(K-1) }_{\text{APPNP}}  + \alpha_k \b F^{(0) } \b W^{(K-1)}, \\
    &\mathbf{F}^{(K)}_{\text{GCNII}} = \left((1-\alpha_{K-1})\mathbf{H}_{gcn}\b F^{(K-1) }_{\text{GCNII}} + \alpha_{K-1} \b F^{(0) } \right) \left(\beta_{K-1}\b W^{(K-1)} + (1-\beta_{K-1})\b I \right) ,
\end{align*}
 

The experiments are conducted on the graph of size $N=100$, and we set $\alpha_{k}=\beta_{k}=0.5$ for $k=0,1,\dots,K-1$. Furthermore, for the sake of comparison, we evaluate energies $E_i$ and $E_h$ regarding to the normalized $\b F^{(K)}_{\text{ResGCN}}$, $\b F^{(K)}_{\text{APPNP}}$ and $\b F^{(K)}_{\text{GCNII}}$ , where $E_i$ denotes the frequency domain information as defined in the following.
\begin{equation}\label{equ:Ei}
\nonumber E_i(\b F^{(K)}):=E_i(\b F^{(K)};\b H) := \sum_{j=1}^{m_K} \left | \langle f_j^{(K)}, \b h_i\rangle \right |^2,
\end{equation}
which characterizes the frequency energy along the direction $\b h_i$ over all $m_K = 100$ channels for output $\b F^{(K)}$ under filter $\b H$. 

Based on Theorem \ref{thm:1ddecay}, we first compute the high frequency energy $E_{h}$ for the three GCNs and then list the results of $\ln(E_{h}(\mathbf{F}^{(K)}))$ in Table \ref{tab:high_fre_three_nets}. From Table \ref{tab:high_fre_three_nets}, we observe that as the depth $K$ increases, the high frequency parts of both APPNP and GCNII are always dominant, while the high frequency parts of ResGCN are not. This shows that APPNP and GCNII perform better than ResGCN in alleviating the over-smoothing problem, which is consistent with the results of the existing works.


\begin{table}[!t]
\small
\begin{tabular}{|c|c|c|c|c|c|c|c|}
\hline
  $K$    & 1        & 5        & 10       & 20       & 30       & 40       & 50       \\ \hline
ResGCN & -0.0193 & -0.3469 & {\bf -2.9945} & {\bf -10.465} & {\bf -17.68} & {\bf -22.94} & {\bf -26.95} \\ \hline
APPNP  & -0.0209 & -0.0176 & -0.0119 & -0.0151 & -0.0163 & -0.0135 & -0.0163 \\ \hline
GCNII  & -0.0408 & -0.0452  & -0.04 & -0.0412 & -0.0473 & -0.0501 & -0.0408 \\ \hline
\end{tabular}
\caption{Logarithmic high frequency energy $\ln(E_h)$ of ResNet, APPNP and GCNII.} 
\label{tab:high_fre_three_nets}
\end{table}

 Furthermore, to analyze the energy distribution in the frequency domain, we plot the histogram of energies $E_{i}$ for the normalized $\mathbf{F}^{(K)}$ with $K=50$ in Figure \ref{fig:hist_three_nets}. From the results of Figure \ref{fig:hist_three_nets} we can see that, both APPNP and GCNII have energy distribution at each frequency, while ResGCN's energy distribution is mainly concentrated in the low frequency part and the energy in the high frequency part is almost negligible. This again explains the limited ability of ResGCN to alleviate the over-smoothing problem in deep networks.

Combining the experimental results in this part and the structure of the three networks, we can conclude that there may exist two effective avenues to alleviate the over-smoothing problem in GCNs, namely: 1) removing/replacing ReLU activation function, and 2) retaining some major high frequency of the input features.

\begin{figure}[!t] 
\centering 
\subfigure[ResGCN]{\includegraphics[width=0.32\textwidth]
{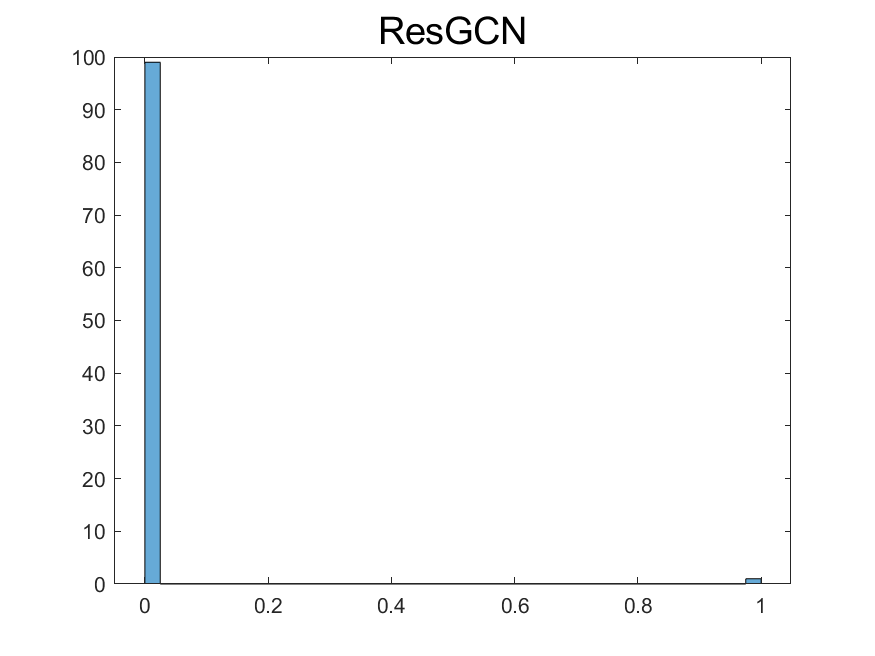}}
\subfigure[APPNP]{\includegraphics[width=0.32\textwidth]
{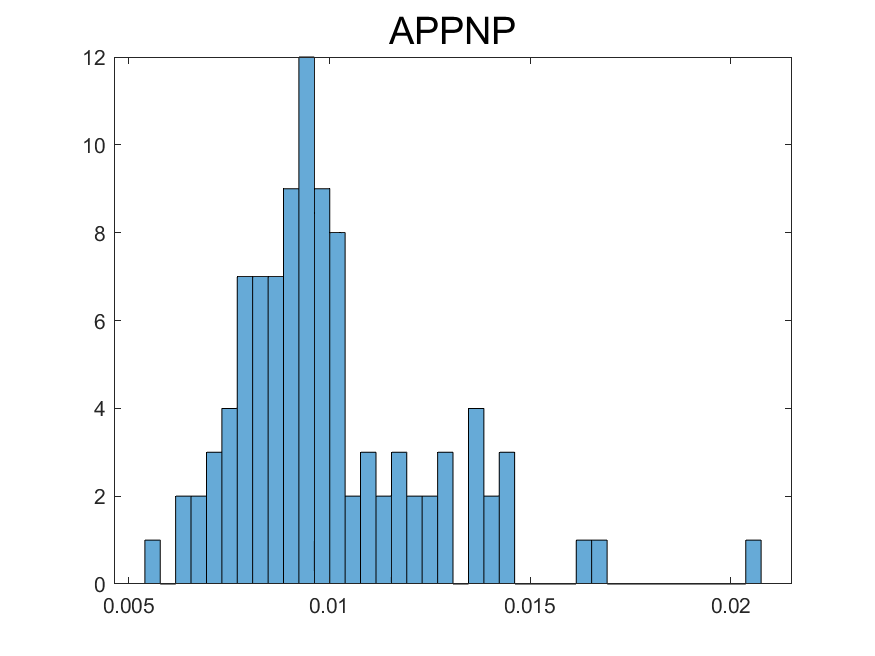}}
\subfigure[GCNII]{\includegraphics[width=0.32\textwidth]
{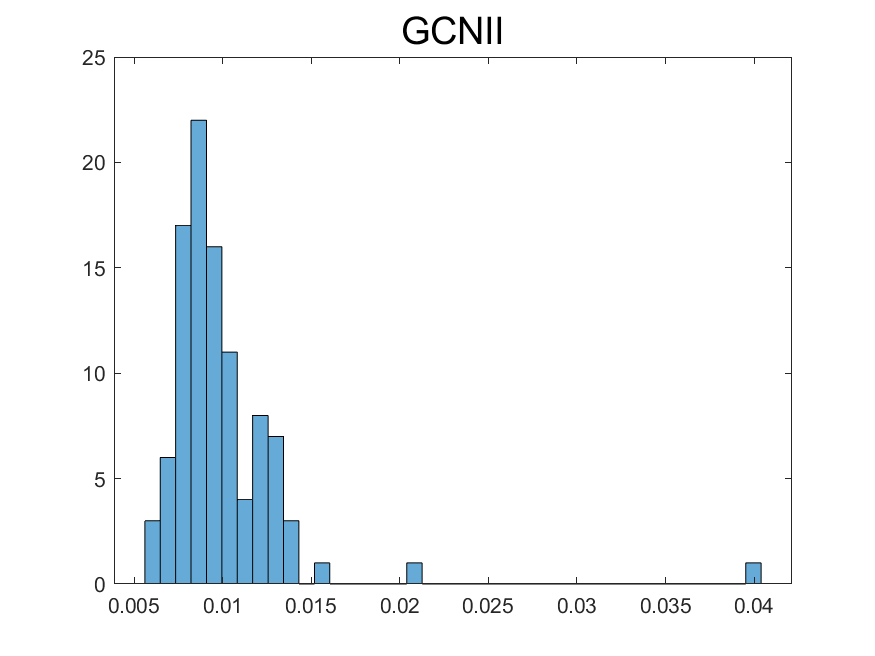}} 
\caption{Histogram of energies in the frequency domain, evaluating distribution of outputs of ResNet, APPNP and GCNII.} 
\label{fig:hist_three_nets} 
\end{figure}

\section{Conclusion and Discussion}\label{sec:conclusion}


This study has explored the approximation theory of functions on graphs, leveraging the $K$-functional to enhance our understanding of Graph Convolutional Networks (GCNs). Our findings contribute to the theoretical underpinnings of GCNs, particularly in terms of their abilities and limitations in approximating functions and managing the over-smoothing phenomenon.

The insights from this study highlight the need to maintain a balance between low and high-frequency information during the forward pass of data through GCNs. Such balance is crucial for mitigating over-smoothing and enhancing the approximation capabilities of these networks. Potential strategies derived from our findings to address these challenges include:
\begin{enumerate}
\item Incorporating Residual Connections: Adding residual connections between the input layer and intermediate layers to GCNs can help preserve rich information, allowing the network to maintain access to high-frequency details that might otherwise diminish through successive layers.
\item	Enhancing the Filter Channels: Introducing additional or adaptive filter channels can provide finer control over how frequency components are processed, enabling the network to selectively emphasize or de-emphasize certain frequencies based on the task requirements.
\item	Separating Different Frequencies for Processing: Implementing mechanisms to process different frequency components separately instead of only employing low-pass filters can prevent the loss of crucial information, particularly the high-frequency signals, thus reducing the risk of over-smoothing. 
\end{enumerate}

\bibliographystyle{elsarticle-num-names}
\biboptions{square,numbers,sort&compress}
\bibliography{mybibfile}
\end{document}